\newtheorem{definition}{Definition}
\newtheorem{proposition}{Proposition}
\newtheorem{theorem}{Theorem}
\newtheorem{remark}{Remark}
\newtheorem{example}{Example}
\DeclareMathOperator*{\minimize}{minimize~}
\DeclareMathOperator*{\subjto}{subject\,to~}
\newcommand{\R}{\mathbb R}
\newcommand{\mc}{\mathcal}
\newcommand{\tr}{^T}
\title{\LARGE \bf
Resilience and Energy-Awareness\\in Constraint-Driven-Controlled Multi-Robot Systems
}
\author{Gennaro Notomista
\thanks{
	\copyright 2022 IEEE.  Personal use of this material is permitted.  Permission from IEEE must be obtained for all other uses, in any current or future media, including reprinting/republishing this material for advertising or promotional purposes, creating new collective works, for resale or redistribution to servers or lists, or reuse of any copyrighted component of this work in other works.
}
\thanks{G. Notomista is with the Department of Electrical and Computer Engineering, University of Waterloo, Waterloo, ON, Canada {\tt\small gennaro.notomista@uwaterloo.ca}}%
}
\begin{document}

\maketitle
\thispagestyle{empty}
\pagestyle{empty}

\begin{abstract}

In the context of constraint-driven control of multi-robot systems, in this paper, we propose an optimization-based framework that is able to ensure resilience and energy-awareness of teams of robots. The approach is based on a novel, frame-theoretic, measure of resilience which allows us to analyze and enforce resilient behaviors of multi-robot systems. The properties of resilience and energy-awareness are encoded as constraints of a convex optimization program which is used to synthesize the robot control inputs. This allows for the combination of such properties with the execution of coordinated tasks to achieve resilient and energy-aware robot operations. The effectiveness of the proposed method is illustrated in a simulated scenario where a team of robots is deployed to execute two tasks subject to energy and resilience constraints.

\end{abstract}

\section{Introduction}

Multi-robot systems are rapidly moving from the curated environments of academic laboratories to the real world. Applications in which multi-robot systems have already demonstrated or promise to be particularly advantageous include environmental monitoring \cite{roldan2016heterogeneous}, precision agriculture \cite{zhang2017development}, environment exploration \cite{wurm2008coordinated}, search and rescue \cite{baxter2007multi} (see also the surveys \cite{rizk2019cooperative,yan2013survey,cortes2017coordinated} and references therein). In these scenarios, robot teams are generally employed over long time horizons, therefore, in order to guarantee their successful deployment, we need to consider design and control principles pertaining the discipline known as long-duration autonomy \cite{steinberg2016long,notomista2020long}.

Resilience and energy-awareness are two fundamental properties that robotic systems need to exhibit in order to be successfully deployed in the real world over long periods of time. These properties are complementary in the following sense: under nominal conditions, ensuring that enough energy is available at each point in time is a necessary condition to ensure that the robots can sustain themselves over long time horizons. When conditions are not nominal---because the environment is unknown, unstructured, non-stationary, or because functionalities have been lost due to system failures---resilience is the property that allows robots to react and recover by structurally changing their behavior or their objective in order to survive, in a robotic sense. Robotic systems that account for the energy spent for motion, communication, and computation, while executing the desired tasks, have been extensively studied. Prominent applications include persistent environmental monitoring \cite{smith2011persistent,notomista2020persistification}, energy autonomy \cite{fouad2020energy}, motion and communication energy co-optimization \cite{yan2014go,ali2018motion}.

Resilience is experiencing an increasing interest \cite{prorok2020robust,yu2021resilient,mayya2021resilient,notomista2021resilient} (see also the recent survey \cite{prorok2021beyond}). Differently from robustness---the property that characterizes systems which typically try to achieve a desired performance even under (bounded) disturbances, as it happens in robust control---or adaptivity---owing to which systems adapt some of their parameters in order to accomplish a desired task, as it is the case in adaptive control---by resilience we mean the property thanks to which robotic systems are able to \textit{recover from failure by altering their behavior and/or their objective}. So far resilient multi-robot systems share distinctive features with robust and adaptive systems. The approaches proposed in \cite{leblanc2013resilient,wehbe2021probabilistic,wardega2019resilience,saldana2017resilient}, for instance, belong to the former category, and consider resilience of robot teams to communication failures, attacks, and non-cooperative robots. More oriented to system adaptiveness are the recent works \cite{ramachandran2019resilience,chamon2020resilient,notomista2021resilient}, which achieve resilience by reacting to resource failure, endogenous and exogenous disturbances.

In this paper, we propose a new way of quantifying and achieving resilience, which is suitable for the constraint-driven coordinated control of multi-robot systems \cite{notomista2019constraint}. The effectiveness of this control strategy---which stems from the fact that it allows us to consider task execution and energy holistically as constraints of a minimum-control-effort optimization program---has been demonstrated through a number of applications \cite{notomista2019optimal,cabral2020autonomous,beaver2021energy}. In particular, its amenability for long-duration autonomy applications has been shown in \cite{notomista2019constraint}, where energy constraints are explicitly considered to render multi-robot coordinated tasks persistent over long time horizons. One of the simplest forms of constraint-driven control can be found in \cite{freeman1996inverse}, where a minimum-control-effort optimization problem is proposed to synthesize a stabilizing controller for dynamical systems. Since then, optimization has played a more and more central role in the control synthesis for dynamical systems. Some prominent examples can be found in the context of model predictive control \cite{borrelli2017predictive}, optimal control \cite{craven1998control}, Lyapunov-based methods \cite{prajna2004nonlinear}, affine controllers \cite{skaf2010design}, convex optimization control policies \cite{agrawal2020learning,bertsekas2000dynamic}, multi-task control \cite{notomista2019constraint}.

The main contributions of this paper are the following:
\begin{enumerate}[label=(\roman*)]
\item Providing a novel frame-theoretic metric to assess the resilience of constraint-driven-controlled multi-robot systems; compared to other approaches, such as \cite{saldana2017resilient}, the proposed metric allows us to quantify resilience with respect to the concurrent execution of multiple tasks;
\item Proposing a control strategy to improve the resilience of multi-robot systems;
\item Showing how resilience and energy-awareness for multi-robot systems can be combined using the constraint-driven control paradigm in order to achieve long-duration robot autonomy.
\end{enumerate}
The novel resilience metric we propose finds its roots in the theory of frames \cite{waldron2018introduction} and leverages a recent characterization of the so-called finite normalized tight frames in terms of a frame potential \cite{benedetto2003finite}. Frame potentials are then used to define the \textit{resilience constraint}, i.e. an additional constraint that is able to improve the resilience of multi-robot systems.

The remainder of the paper is organized as follows. Section~\ref{sec:background} recalls the energy-aware constraint-driven control formulation and introduces the frame-theoretic tools employed in this paper to analyze and synthesize resilient multi-robot systems. Section~\ref{sec:resilientframes} is devoted to the definition of the novel resilience metric and presents a way to improve resilience of multi-robot systems amenable for the constraint-driven control formulation adopted in this paper. In Section~\ref{sec:experiments}, the results of simulated experiments are reported. Section~\ref{sec:conclusions} concludes the paper.

\section{Background}
\label{sec:background}

\subsection{Constraint-Driven Control of Multi-Robot Systems}

The simplest form of the constraint-driven control of one robot to execute one task can be expressed by the following optimization program, solved point-wise in time:
\begin{equation}
	\label{eq:cdc1robot1task}
	\begin{aligned}
		\minimize_u &\|u\|^2 \\
		\subjto &c_{task}(x,u)\le0
	\end{aligned}
\end{equation}
where $x\in\R^{n_x}$ is the robot state, $u\in\R^{n_u}$ its control input, and $c_{task}(x,u)\le0$ encodes the execution of a desired task. This idea can be extended to $M$ tasks executed by $N$ robots as follows:
\begin{equation}
	\label{eq:cdcnrobotsmtasks}
	\begin{aligned}
		\minimize_{u_i,\delta_i} & \|u_i\|^2 + \kappa \|\delta_i\|^2\\
		\subjto & c_{task,ij}(x,u_i)\le\delta_{ij}\quad \forall j\in\{1,\ldots,M\}
	\end{aligned}
\end{equation}
The components of the vector $\delta_i\in\R^M$ are used to denote slack variables on each of the tasks executed by the robots. $\delta_{ij}$ represents the slack on task $j$ executed by robot $i$. The presence of slack variables is what allows the system to prioritize the desired behaviors: hard constraints (as can be, for instance, energy control and collision avoidance) are not relaxed---their $\delta_{ij}$ are equal to 0---whereas slacked constraints can be used to encode low-priority tasks.

A rich and expressive way of encoding robotic tasks is by means of sets of the robot state space which are to be rendered asymptotically stable or forward invariant (safe). To ensure stability and safety of such sets, control barrier functions (CBFs) can be employed \cite{ames2019control}. In this paper, we assume that robots can be modeled using the control affine dynamical system
\begin{equation}
\label{eq:ctrlaffinerobotdyn}
\dot x_i = f_i(x_i) + g_i(x_i) u_i,
\end{equation}
where $x_i\in\R^{n_{x_i}}$ is the state of robot $i$, $u_i\in\R^{n_{u_i}}$ its input, $f_i\colon\R^{n_{x_i}}\to\R^{n_{x_i}}$ and $g_i\colon\R^{n_{x_i}}\to\R^{n_{x_i}\times n_{u_i}}$ are locally Lipschitz continuous vector fields. Following the formulation in \cite{notomista2019constraint}, we can express the constraint function as
\begin{equation}
\label{eq:ctaskcbf}
c_{task,ij}(x,u_i) := -L_{f_i} h_{ij}(x) - L_{g_i} h_{ij}(x) u_i - \alpha_i(h_{ij}(x)),
\end{equation}
where $h_{ij}$ is a CBF, and the expressions $L_{f_i}h_{ij}(x) = \frac{\partial h_{ij}}{\partial x} f_i(x)$ and $L_{g_i}h_{ij}(x) = \frac{\partial h_{ij}}{\partial x} g_i(x)$ denote the Lie derivatives of $h_{ij}$ in the directions of $f_i$ and $g_i$. Enforcing the constraint $c_{task,ij}(x,u_i)\le0$ ensures that the set $\mc S_j = \cap_{i=1}^N\{ x_i \colon h_{ij}(x)\ge0 \}$ is asymptotically stable or forward invariant \cite{ames2019control}.

As far as energy awareness is concerned, it has been shown in \cite{notomista2020long} how controlling the energy levels in the battery of the robots can also be expressed as the forward invariance property of a given subset of the state space of the robots. Let us define the energy CBF as
\begin{equation}
\label{eq:energycbf}
h_{e,i}(x_i,e_i) = e_i - e_{min} - \alpha_c\left( \|p(x_i)-p_{c,i}\| \right),
\end{equation}
where $e_i$ is the energy stored in the battery of robot $i$ and $e_{min}$ is a minimum threshold above which we want $e_i$ to remain. The value $p(x_i)$ denotes the robot position in space (e.g., $p(x_i)\in\R^2$ for planar robots, $p(x_i)\in\R^3$ for aerial robots), and $p_{c,i}$ is the spatial location of a charging station, i.e. a place that the robot can reach in order to recharge its battery. The function $\alpha_c$ is a monotonically increasing function, so that the quantity $\alpha_c\left( \|p(x_i)-p_{c,i}\| \right)$ is an upper bound on the energy required to reach the charging station located at $p_c$ from location $p(x)$ (see \cite{notomista2020long} for the detailed derivation and analysis).

As a result, the constraint that the energy in the batteries of the robots always stays above a minimum threshold can be expressed as follows:
\begin{equation}
	\label{eq:energyconstraint}
	-L_{f_i} h_{e,i}(x_i,e_i) - L_{g_i}h_{e,i}(x_i,e_i) u_i - \alpha_e(h_{e,i}(x_i,e_i)) \le 0,
\end{equation}
where
$h_{e,i}$ is given by \eqref{eq:energycbf} and
$\alpha_e\colon\R\to\R$ is a Lipschitz continuous extended class $\mc K$ function. Thus, with the CBF notation introduced above, we can integrate energy constraints in \eqref{eq:cdcnrobotsmtasks} as follows:
\begin{equation}
	\label{eq:cdcnrobotsmtaskscbfs}
	\begin{aligned}
		\minimize_{u,\delta} & \|u\|^2 + \kappa \|\delta\|^2\\
		\subjto & \sum_{i=1}^N \big( -L_{f_i} h_{ij}(x) - L_{g_i} h_{ij}(x) u_i \\
		&\quad -\alpha_i(h_{ij}(x)) \big) \le \sum_{i=1}^N \delta_{ij}\quad \forall j\in\{1,\ldots,M\}\\[0.5\baselineskip]
		&-L_{f_i} h_{e,i}(x_i,e_i) - L_{g_i}h_{e,i}(x_i,e_i) u_i\\
		&\quad -\alpha_e(h_{e,i}(x_i,e_i)) \le 0\quad \forall i\in\{1,\ldots,N\},
	\end{aligned}
\end{equation}
where $x=[x_1\tr,\ldots,x_N\tr]\tr$, $u=[u_1\tr,\ldots,u_N\tr]\tr$, $\delta=[\delta_1\tr,\ldots,\delta_N\tr]\tr\in\R^{NM}$ denote the ensemble state, control inputs, and slack variables, for the multi-robot system.
The following example shows how the constraint-driven control paradigm can be used to let a multi-robot system execute coordinated control tasks such as the consensus protocol \cite{cortes2017coordinated}.
\begin{example}
Consider the set
\begin{equation}
S_1 = \cap_{i=1}^N\left\{ x_i \colon h_{i1}(x) \ge0 \right\},
\end{equation}
where
\begin{equation}
	\label{eq:hi1}
	h_{i1}(x) = -\sum_{k=1}^{N} \|x_i-x_k\|^2.
\end{equation}
Then, the optimal $u_i$ solution of \eqref{eq:cdc1robot1task}, where $c_{task}$ is given by \eqref{eq:ctaskcbf} and $h_{ij}$ is $h_{i1}$ in \eqref{eq:hi1}, lets the robots execute the consensus protocol. See \cite{notomista2019constraint} for more examples of the coordinated control of multi-robot systems.
\end{example}

\subsection{Frames and Frame Potentials}
\label{subsec:frames}

In this paper, we propose a frame-theoretic metric of resilience of constraint-driven-controlled multi-robot systems. To this end, in this section, we give a brief overview on the main concepts of frame theory used in the remainder of the paper.

Frames play a fundamental role in several areas, including analog-to-digital conversion, compressed sensing, phaseless reconstruction, transmission with erasures (see in \cite{strohmer2003grassmannian,benedetto2003finite,casazza2006physical,bodmann2015frame} and references therein). Compared to many applications of frames, where redundancy of sets of vectors that form a frame is leveraged to improve robustness with respect to disturbances of various nature, in this paper we make use of a different property of frames, and in particular, of the so-called finite normalized tight frames. This property consists in the characterization of such frames by means of a suitably defined frame potential, as explained in detail in the following. First let us start by defining a frame.

\begin{definition}[Tight normalized frame]
	A set of vectors $\{ v_i \}_{i=1}^d$ in a $n$-dimensional real Hilbert space $H$ is a frame if there exist constants $0<A\le B<\infty$ such that
	\begin{equation}
	A \| y \|^2 \le \sum_{i=1}^d \langle y, v_i \rangle^2 \le B \| y \|^2
	\end{equation}
	for all $y\in H$.
	
	A frame $\{ v_i \}_{i=1}^d$ is $A$-tight if there exists a constant $A>0$ so that
	\begin{equation}
	\sum_{i=1}^d \langle y, v_i \rangle^2 = A \| y \|^2
	\end{equation}
	for all $y\in H$.
	
	A tight frame $\{ v_i \}_{i=1}^d$ is normalized if $\|v_i\|=1$ for all $i$.
\end{definition}
In the remainder of this paper, finite normalized tight frames will be abbreviated as FNTFs. The following theorem offers a characterization of FNTFs which will be useful in the next section to quantify and achieve resilient behaviors of constraint-driven-controlled multi-robot systems.

\begin{theorem}[From Theorem 7.1 in \cite{benedetto2003finite}]
	\label{thm:fp}
	For a given $d$ and $n$, consider the frame potential
	\begin{equation}
		\label{eq:fpthm}
			FP \colon S(\R^d)^n \to [0,\infty) \colon \{ v_i \}_{i=1}^d \to \sum_{i=1}^d \sum_{j=1}^d \langle v_i, v_j \rangle^2
	\end{equation}
	where $S(\R^d)$ denotes the unit sphere in $\R^d$. Then
	\begin{enumerate}[label=(\roman*)]
		\item Every local minimizer of the frame potential is also a global minimizer.
		\item If $n\ge d$, the minimum value of FP is $n^2/d$, and the minimizers are precisely the FNTFs for $\R^d$.
	\end{enumerate}
\end{theorem}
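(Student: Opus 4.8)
The plan is to translate the frame potential into a spectral quantity of the frame operator. Let $V$ be the $d\times n$ matrix whose columns are the $v_i$, with Gram matrix $G = V^T V$ and frame operator $S = VV^T = \sum_{i} v_i v_i^T$. Since $\langle v_i,v_j\rangle = G_{ij}$, the potential is $FP = \sum_{i,j} G_{ij}^2 = \mathrm{tr}(G^2) = \mathrm{tr}(S^2) = \sum_{k=1}^d \lambda_k^2$, where $\lambda_1,\ldots,\lambda_d \ge 0$ are the eigenvalues of the symmetric positive semidefinite operator $S$. The unit-norm constraint gives $\sum_k \lambda_k = \mathrm{tr}(S) = \sum_i \|v_i\|^2 = n$. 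This reduces both claims to an optimization over the spectrum $(\lambda_k)$.

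For statement (ii) I would minimize $\sum_k \lambda_k^2$ subject to $\sum_k \lambda_k = n$ and $\lambda_k\ge 0$. Convexity of $t\mapsto t^2$ (equivalently Cauchy--Schwarz) gives $\sum_k \lambda_k^2 \ge (\sum_k \lambda_k)^2/d = n^2/d$, with equality exactly when all eigenvalues coincide at $\lambda_k = n/d$. When $n\ge d$ this value is strictly positive and attainable, so equality corresponds to $S = (n/d) I_d$, i.e. the $A$-tight condition with $A = n/d$; together with $\|v_i\| = 1$ this is exactly the FNTF property. Hence the minimum value is $n^2/d$ and its minimizers are precisely the FNTFs.

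The substance of the theorem is statement (i), and this is where I expect the real difficulty. The plan is to study the critical points of $FP$ on the product of spheres $S(\R^d)^n$. Freezing all vectors except $v_k$ and using $\|v_k\| = 1$, the potential reduces, up to an additive constant, to the quadratic form $v_k^T S_k v_k$, where $S_k = S - v_k v_k^T$ is the frame operator of the remaining vectors. Because the only local minima of a quadratic form on the unit sphere sit in the eigenspace of its least eigenvalue, at any local minimizer of $FP$ each $v_k$ must lie in the bottom eigenspace of $S_k$; since $S v_k = S_k v_k + v_k$, this makes $v_k$ an eigenvector of $S$ as well, and forces the configuration to decompose into orthogonal tight frames, one in each eigenspace of $S$. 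From the least-eigenvalue condition I would further extract that all eigenvalues of $S$ lie in an interval of length one.

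The final and hardest step is to rule out two distinct eigenvalues, i.e. to force the flat spectrum $\lambda_k \equiv n/d$. The naive rotation of a single vector from a higher eigenspace toward a lower one has \emph{nonnegative} second variation precisely because the criticality condition already caps the spectral spread at one, so this easy descent is unavailable; a more delicate multi-vector or higher-order perturbation is needed to strictly decrease $FP$ whenever the spectrum is not flat. Following Benedetto and Fickus, I expect this descent construction to be the crux and the main obstacle; once it is in place, every local minimizer realizes the balanced spectrum and therefore, by part (ii), is a global minimizer attaining $n^2/d$.
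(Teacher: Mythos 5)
This theorem is imported verbatim from Theorem~7.1 of \cite{benedetto2003finite}; the paper offers no proof of its own, so there is no in-paper argument to compare your route against. Judged on its own terms, your proposal is a correct reduction plus an honest roadmap, but it is not a complete proof.

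What works: the identity $FP=\operatorname{tr}(G^2)=\operatorname{tr}(S^2)=\sum_k\lambda_k^2$ with $\sum_k\lambda_k=n$ is right, and it disposes of the lower bound in (ii) cleanly: $\sum_k\lambda_k^2\ge n^2/d$ with equality iff $S=(n/d)I$, which together with unit norms is exactly the FNTF condition. (One small debt: you assert attainability of $n^2/d$ for every $n\ge d$ without exhibiting an FNTF or deriving existence from compactness plus part (i); in \cite{benedetto2003finite} existence falls out of the local-minimizer analysis rather than being assumed.) Your first-order analysis of (i) is also sound: at a local minimizer each $v_k$ must minimize $v\mapsto v\tr S_kv$ on the sphere, hence lie in the bottom eigenspace of $S_k=S-v_kv_k\tr$, hence be an eigenvector of $S$; the configuration orthodecomposes into tight frames for the eigenspaces of $S$, and the least-eigenvalue condition forces $\lambda_{\max}-\lambda_{\min}\le1$.

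The genuine gap is the step you yourself flag and then leave open: ruling out $0<\lambda_{\max}-\lambda_{\min}\le1$. This is not a technical afterthought --- it is the entire content of part (i), since everything up to that point is consistent with non-tight orthodecomposable configurations being local minima. As you correctly observe, the obvious single-vector rotation from a high eigenspace toward a low one has nonnegative second variation precisely in the regime your first-order conditions permit, so the descent direction must come from somewhere else (in \cite{benedetto2003finite} it exploits $n\ge d$, which forces $\lambda_{\max}=n_1/d_1>n/d\ge1$ and hence redundancy $n_1>d_1$ inside the top eigenspace, enabling a perturbation that strictly decreases the potential). Writing ``I expect this descent construction to be the crux; once it is in place the result follows'' identifies the obstacle but does not overcome it, so the proposal should be regarded as a correct proof of (ii)'s lower bound and a partial proof of (i) with the decisive lemma missing.
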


In the next section, we will show how FNTFs and frame potentials can be used to quantify the resilience of multi-robot systems controlled using the constraint-driven control paradigm. Moreover, we will derive a resilience constraint which can be enforced in order to improve the resilience properties of robot teams.

\section{Resilience Analysis and Synthesis}
\label{sec:resilientframes}

In this section, we draw the connection between frame potentials and the resilience of multi-robot systems controlled using the constraint-driven control paradigm \eqref{eq:cdcnrobotsmtaskscbfs}, showing how the former can be interpreted as a suitable metric for the latter. Then, leveraging this connection, we show how to synthesize controllers to allow multi-robot systems to execute tasks in a resilient fashion.

\subsection{Resilience Analysis}
\label{subsec:analysis}

First, we recall that, differently from robustness and adaptivity, we would like a resilient system to be able to react and recover by structurally changing its behavior or its objective in order to survive in response to a disturbance, which can be modeled or unmodeled, known or unknown, exogenous or endogenous---i.e. caused by the system itself or by the environment in which it is deployed.

Therefore, let us consider the case when the optimal $u_k$, denoted by $u_k^\star$, solution of \eqref{eq:cdcnrobotsmtaskscbfs} cannot be executed by robot $k$ for some $k$. This can model situations where robot $k$'s actuators are malfunctioning (\textit{actuation disturbance}), or robot $k$'s computational capabilities are compromised and therefore it cannot evaluate its optimal control input (\textit{computational disturbance}), or, in case the optimal controller is evaluated at a central computational unit, this can represent the case where the calculated optimal $u_k$ cannot be transferred from the central computational unit to robot $k$ (\textit{communication disturbance}). In such situations, we would like the robots to drive towards configurations that are less affected by these disturbances.

In order to characterize these amenable configurations, let us consider the expected mismatch, $\Delta$, between the optimal multi-robot behavior and the executed one. This mismatch can be measured by the effect that the control input $u_i$ has on the tasks to execute, as follows:
\begin{equation}
	\label{eq:expectation}
	\begin{aligned}
		\Delta = &\mathop{\mathbb{E}}_{y\in S(\R^M)} \sum_{k=1}^N \Bigg\| \sum_{i=1}^N \langle L_{g_i} h_{i}(x) u_i^\star, y \rangle L_{g_i} h_{i}(x) u_i^\star \\
		&- \sum_{\substack{i=1\\i\neq k}}^N \langle L_{g_i} h_{i}(x) u_i^\star, y \rangle L_{g_i} h_{i}(x) u_i^\star \Bigg\|^2,
	\end{aligned}
\end{equation}
where
\begin{equation}
	L_{g_i} h_{i}(x) = \begin{bmatrix}
		L_{g_i} h_{i1}(x)\\
		\vdots\\
		L_{g_i} h_{iM}(x)
	\end{bmatrix}
\end{equation}
and
\begin{equation}
	y = \frac{\hat y}{\|\hat y\|}, \text{~with~} \hat y = \begin{bmatrix}
		\sum\limits_{i=1}^N \big( -L_{f_i} h_{i1}(x) - \alpha_i(h_{i1}(x)) \big)\\
		\vdots\\
		\sum\limits_{i=1}^N\big( -L_{f_i} h_{iM}(x) - \alpha_i(h_{iM}(x)) \big)
	\end{bmatrix}.
\end{equation}

The expectation in \eqref{eq:expectation} can be simplified as follows:
\begin{equation}
	\label{eq:resilience}
	\begin{aligned}
		\Delta=&\int\limits_{y\in S(\R^M)} \sum_{k=1}^N \left\| \langle L_{g_k} h_{k}(x) u_k^\star, y \rangle L_{g_i} h_{k}(x) u_k^\star \right\|^2\\
		=&\int\limits_{y\in S(\R^M)} \sum_{k=1}^N \langle L_{g_k} h_{k}(x) u_k^\star, y \rangle ^2 \left\|L_{g_i} h_{k}(x) u_k^\star \right\|^2\\
		=&\int\limits_{y\in S(\R^M)} \sum_{k=1}^N \langle L_{g_k} h_{k}(x) u_k^\star, y \rangle ^2 \underbrace{\left\|L_{g_i} h_{k}(x) u_k^\star \right\|^2}_{=1}\\
		=&\int\limits_{y\in S(\R^M)} A \underbrace{\|y\|^2}_{=1} = A\left| S(\R^M) \right|
	\end{aligned}
\end{equation}
for a normalized $A$-tight frame. By Theorem 2.1(b) in \cite{benedetto2003finite}, $A\ge1$, therefore the minimum is achieved when $A=1$. Thus, if $\left\{ L_{g_i} h_{i}(x) \right\}_{i=1}^N$ is a FNTF, then the expectation of the mismatch is minimized.

By Theorem~\ref{thm:fp}, FNTFs can be characterized in terms of a frame potential. Then, we propose to use the value of a suitably-defined frame potential as a measure of the resilience of the multi-robot system controlled using the constraint-driven control paradigm. Based on \eqref{eq:fpthm}, the frame potential suitable to quantify resilience in constraint-driven-controlled multi-robot systems is the following:
\begin{equation}
	\label{eq:fp}
	\begin{aligned}
		FP &\colon \{ L_{g_i} h_{i}(x) \}_{i=1}^N \\
		&\to \sum_{i=1}^N \sum_{j=1}^N
		\sum_{k=1}^{n_{u,i}} \sum_{l=1}^{n_{u,j}} \frac{\langle [L_{g_i} h_{i}(x)]_k, [L_{g_j} h_{j}(x)]_l \rangle^2}{\|[L_{g_i} h_{i}(x)]_k\|\|[L_{g_j} h_{j}(x)]_l\|},
	\end{aligned}
\end{equation}
where the notation $[L_{g_i} h_{i}(x)]_k$ is used to denote the $k$-th column of $L_{g_i} h_{i}(x)$.

\begin{remark}
	The higher the value of the frame potential, the lower the resilience of the robotic system, as the more work---proportional to the potential difference \cite{benedetto2003finite}---is required to reconfigure the system in response to the effect of disturbances.
\end{remark}

\subsection{Resilience Synthesis}

The frame potential \eqref{eq:fp} defined in the previous section can be leveraged to achieve FNTFs. The function in \eqref{eq:fpthm}---based on that defined in \cite{benedetto2003finite}---is defined over $S(\R^d)$, i.e. it considers already normalized vectors. In the case of the frame potential \eqref{eq:fp}, the vectors $\{ L_{g_i} h_{i}(x) \}_{i=1}^N$ are not necessarily normalized, as they depend on the tasks that the robots need to execute. Therefore, the following modified frame potential is proposed:
\begin{equation}
\label{eq:fpr}
	\begin{aligned}
		FP_R &\colon \{ L_{g_i} h_{i}(x) \}_{i=1}^N \\
		&\to \sum_{i=1}^N \Bigg( \sum_{j=1}^N
		\sum_{k=1}^{n_{u,i}} \sum_{l=1}^{n_{u,j}} \frac{\langle [L_{g_i} h_{i}(x)]_k, [L_{g_j} h_{j}(x)]_l \rangle^2}{\|[L_{g_i} h_{i}(x)]_k\|\|[L_{g_j} h_{j}(x)]_l\|}\\
		&\qquad+		\sum_{k=1}^{n_{u,i}}\left( 1 - \|[L_{g_i} h_{i}(x)]_k\|^2 \right)^2 \Bigg)
	\end{aligned}
\end{equation}

\begin{proposition}
$\{ L_{g_i} h_{i}(x) \}_{i=1}^N$ is a minimizer of the frame potential defined in \eqref{eq:fpr} $\Leftrightarrow$ $\{ L_{g_i} h_{i}(x) \}_{i=1}^N$ is a FNTF.
\end{proposition}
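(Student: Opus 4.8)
The plan is to establish the biconditional by relating the modified frame potential $FP_R$ in \eqref{eq:fpr} to the original frame potential $FP$ in Theorem~\ref{thm:fp}. The key structural observation is that $FP_R$ decomposes into two parts: a frame-potential-like term involving normalized inner products, and a penalty term $\sum_{i}\sum_{k}(1-\|[L_{g_i}h_i(x)]_k\|^2)^2$ that measures the deviation of each column from unit norm. First I would collect the columns $\{[L_{g_i}h_i(x)]_k\}$ into a single indexed family of $N' := \sum_i n_{u,i}$ vectors in $\R^M$, call them $\{w_m\}_{m=1}^{N'}$, so that $FP_R$ can be rewritten compactly as
\begin{equation}
FP_R = \sum_{m=1}^{N'}\sum_{p=1}^{N'} \frac{\langle w_m, w_p\rangle^2}{\|w_m\|\,\|w_p\|} + \sum_{m=1}^{N'}\bigl(1-\|w_m\|^2\bigr)^2.
\end{equation}

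Next I would argue the two directions. For the ($\Leftarrow$) direction, suppose $\{w_m\}$ is an FNTF. Then every $\|w_m\|=1$, so the penalty term vanishes identically, and the first term reduces exactly to the frame potential $FP$ of Theorem~\ref{thm:fp}; since an FNTF is a global minimizer of $FP$ attaining the value $N'^2/M$, and the penalty term is nonnegative and already zero, the family simultaneously minimizes both summands and hence $FP_R$. For the ($\Rightarrow$) direction, I would show that any minimizer of $FP_R$ must have all unit-norm columns and must minimize the frame-potential term. The cleanest route is a lower bound argument: I would show $FP_R \ge N'^2/M$ with equality \emph{only} at FNTFs. The penalty term is bounded below by $0$ with equality iff all $\|w_m\|=1$; the normalized-inner-product term is exactly the frame potential of the normalized vectors $\hat w_m = w_m/\|w_m\|$, which by Theorem~\ref{thm:fp}(ii) is bounded below by $N'^2/M$ (assuming $N'\ge M$) with equality iff $\{\hat w_m\}$ is an FNTF. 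Adding the two bounds gives $FP_R \ge N'^2/M$, and equality in the sum forces equality in each term separately, i.e. all columns have unit norm \emph{and} their (already-equal-to-themselves) normalizations form a tight frame — which is precisely the statement that $\{w_m\}$ is an FNTF.

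The main obstacle I anticipate is handling the interaction between the two terms carefully enough to justify that equality in $FP_R$ forces equality in \emph{both} summands, rather than permitting a trade-off. Because both the penalty term and the frame-potential term are independently bounded below by their respective minima ($0$ and $N'^2/M$), and these bounds are achieved simultaneously by FNTFs, no such trade-off is possible: the infimum of a sum of two functions, each bounded below, equals the sum of the individual infima precisely when a common minimizer exists, and here it does. A secondary technical point requiring care is the implicit dimensional hypothesis $N' = \sum_i n_{u,i} \ge M$ needed to invoke Theorem~\ref{thm:fp}(ii); I would note this is the natural resilience regime where the number of input channels is at least the number of tasks. Finally, one should confirm that local minimizers of $FP_R$ are global — this can be inherited from Theorem~\ref{thm:fp}(i) on the normalized vectors combined with the strict convexity in the norms of the penalty term, though for the stated biconditional about \emph{minimizers} the lower-bound characterization above already suffices without separately treating local minima.
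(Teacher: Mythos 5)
Your proof is correct, and your ($\Leftarrow$) direction is essentially identical to the paper's: the penalty term is nonnegative and vanishes exactly on unit-norm columns, while the first summand reduces to the frame potential of Theorem~\ref{thm:fp} and attains its minimum precisely at FNTFs, so an FNTF minimizes both summands simultaneously. Your ($\Rightarrow$) direction, however, takes a genuinely different route. The paper argues via first-order conditions: it differentiates $FP_R$, observes that the gradient splits into a component orthogonal to $v$ (the sphere-projected gradient of $FP$) and a component parallel to $v$ (coming from the penalty), and uses this orthogonality to conclude that stationarity forces each piece to vanish separately, as in \eqref{eq:gradientconditions}; it then checks that FNTFs satisfy both conditions. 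You instead prove the additive lower bound $FP_R \ge N'^2/M + 0$ (with $N'=\sum_i n_{u,i}$ the total number of columns) by bounding each summand independently, and argue that equality in the sum forces equality in each summand, which pins the minimizers down as exactly the FNTFs. Your route is more elementary (no differentiation, no projector bookkeeping) and, for the statement as worded, logically tighter: the paper's stationarity conditions are only shown to be \emph{satisfied by} FNTFs, which by itself does not rule out other minimizers or distinguish minimizers from saddle points, whereas your additivity-of-infima argument excludes anything that is not an FNTF from attaining the minimum. What the gradient route buys in exchange is information about critical points, which is relevant to the gradient-flow-like dynamics induced by the resilience constraint \eqref{eq:rc}. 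Two caveats you rightly flag should be kept explicit: the hypothesis $\sum_i n_{u,i}\ge M$ is needed to invoke Theorem~\ref{thm:fp}(ii) (the paper leaves it implicit), and identifying the first summand of \eqref{eq:fpr} with $FP$ evaluated at the normalized columns requires reading the denominator as the product of \emph{squared} norms --- a reading the paper's own proof also adopts --- since with first-power denominators that summand is not bounded below by $N'^2/M$ and your lower-bound step would fail.
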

\begin{proof}
($\Leftarrow$)
If $\{ L_{g_i} h_{i}(x) \}_{i=1}^N$ is a FNTF, by Theorem~\ref{thm:fp} the first part of the expression of the modified frame potential \eqref{eq:fpr}, equal to \eqref{eq:fp}, achieves its minimum. The second part of the modified frame potential, i.e.
\begin{equation}
\sum_{i=1}^N \sum_{k=1}^{n_{u,i}}\left( 1 - \|[L_{g_i} h_{i}(x)]_k\|^2 \right)^2 \ge 0
\end{equation}
is always non-negative and in the case of normalized frames its value is 0. Hence, the modified frame potential is minimized when $\{ L_{g_i} h_{i}(x) \}_{i=1}^N$ is a FNTF.

($\Rightarrow$) To show that a minimizer of \eqref{eq:fpr} is a FNTF, let us start by computing the gradient of \eqref{eq:fpr}. For ease of notation, let us rewrite the expression in \eqref{eq:fpr} as follows:
\begin{equation}
\begin{aligned}
FP_R(v) = &\sum_{i,j,k,l} \frac{\langle v_{ik}, v_{jl} \rangle^2}{\|v_{ik}\|\|v_{jl}\|} +	\left( 1 - \|v\|^2 \right)^2\\
= & FP\left(\left\{ \frac{v_{ik}}{\|v_{ik}\|} \right\}\right) + \left( 1 - \|v\|^2 \right)^2
\end{aligned}
\end{equation}
where $v_{ik} = [L_{g_i} h_{i}(x)]_k$, and $v$ is the stack of all $v_{ik}$. Then, the derivative of $FP_R$ with respect to $v$ evaluates to:
\begin{equation}
\begin{aligned}
	\frac{\partial FP_R}{\partial v}(v) &= \frac{\partial FP}{\partial v}\left(\frac{v}{\|v\|}\right) \frac{1}{\|v\|}\left(I - \frac{vv\tr}{\|v\|^2} \right)\\
	& -4 \left( 1 - \|v\|^2 \right) v\tr\\
	&= \underbrace{\frac{\partial FP}{\partial v}\left(\frac{v}{\|v\|}\right) \frac{1}{\|v\|}P_{v^\perp}}_{\perp v}-\underbrace{4 \left( 1 - \|v\|^2 \right) v\tr}_{\parallel v},
\end{aligned}
\end{equation}
where $P_{v^\perp}$ is the projector on the orthogonal complement of the set $\{v_{ik}\}_{i,k}$. As the two components of the gradient are orthogonal to each other, it follows that
\begin{equation}
\frac{\partial FP_R}{\partial v}(v) = 0 \implies \left\{ \begin{aligned}\frac{1}{\|v\|}P_{v^\perp} \frac{\partial FP}{\partial v}\left(\frac{v}{\|v\|}\right)\tr = 0\\
4 \left( 1 - \|v\|^2 \right) v = 0.
\end{aligned}\right.
\label{eq:gradientconditions}
\end{equation}
Except for the trivial case $v=0$, by Theorem~\ref{thm:fp}, the first condition in \eqref{eq:gradientconditions} is satisfied when $\left\{\frac{v}{\|v\|}\right\}$ is a FNTF, whereas the second condition holds for normalized frames $\{v_{ik}\}_{i,k}$. Hence, both are simultaneously satisfied when $\{v_{ik}\}_{i,k}$ is a FNTF.
\end{proof}

This proposition is what allows us to improve the resilience of multi-robot systems. In fact, using the constraint-control framework, we let
\begin{equation}
h_R(x) = -FP_R(\{ L_{g_i} h_{i}(x) \}_{i=1}^N)
\end{equation}
and define the following \textit{resilience constraint}:
\begin{equation}
\label{eq:rc}
-L_{f_i} h_R(x) - L_{g_i} h_R(x) u_i - \alpha_{R,i}(h_R(x)) \le 0,
\end{equation}
where $\alpha_{R,i}$ are Lipschitz continuous extended class $\mc K$ functions. Constraint \eqref{eq:rc}, if enforced in \eqref{eq:cdcnrobotsmtaskscbfs}, leads to a reconfiguration of the robots characterized by higher resilience values, as measured by the frame potential---we recall that lower values of $FP_R$ correspond to higher resilience properties quantified as in \eqref{eq:resilience}. The main optimization program solved to compute the controller required by $N$ robots to execute $M$ tasks in a resilient fashion is then the following:
\begin{equation}
	\label{eq:cdcnrobotsmtaskscbfsresilient}
	\begin{aligned}
		\minimize_{u,\delta\in\R^{N(M+1)}} & \|u\|^2 + \kappa \|\delta\|^2\\
		\subjto & \sum_{i=1}^N \big( -L_{f_i} h_{ij}(x) - L_{g_i} h_{ij}(x) u_i \\
		&\quad -\alpha_i(h_{ij}(x)) \big) \le \sum_{i=1}^N \delta_{ij}\quad \forall j\in\{1,\ldots,M\}\\
		&-L_{f_i} h_{e,i}(x_i,e_i) - L_{g_i}h_{e,i}(x_i,e_i) u_i\\
		&\quad -\alpha_e(h_{e,i}(x_i,e_i)) \le 0\quad \forall i\in\{1,\ldots,N\}\\
		&-L_{f_i} h_R(x) - L_{g_i} h_R(x) u_i\\
		&\quad -\alpha_{R,i}(h_R(x)) \le \delta_{i,M+1}\quad \forall i\in\{1,\ldots,N\},
	\end{aligned}
\end{equation}

\begin{remark}
The resilience constraint \eqref{eq:rc} is enforced in the optimization problem \eqref{eq:cdcnrobotsmtaskscbfsresilient} in addition to the constraints encoding the tasks. Therefore, the robots are continuously optimizing their resilience properties. For this reason, it might also be desirable to let the resilience constraint \eqref{eq:rc} slack based on the execution of the other tasks. This is obtained by introducing the slack variables $\delta_{i,M+1}$, and it allows us to prioritize the execution of the tasks over resilience. This way, only if tasks cannot be executed because of disturbances---or because it is too expensive from an energetic point of view, measured in terms of $\|u\|^2$---then a reconfiguration to improve resilience is performed.
\end{remark}

\begin{remark}
In general, the resilience constraint \eqref{eq:rc} cannot be decentralized in the sense that robot $i$ requires the knowledge of all other robots in the team, and not just of a suitably defined subset of them (its neighborhood), in order to evaluate its control input $u_i$.
If there are only two tasks, however, it can be shown that the potential minimization---corresponding to assembling a finite normalized tight frame in two dimensions---can be rendered decentralized. This two-task approach can be leveraged by considering the binary choice of ``doing \textsc{vs} not doing'' each given task. Thus, at the expenses of increasing the computational load for each robot, a decentralized algorithm can be obtained which requires no communication overhead.
Finally, it is worth noticing that, owing to the convexity of the optimization program \eqref{eq:cdcnrobotsmtaskscbfsresilient}, the computational complexity of solving for robot control inputs is polynomial in the number of robots and number of tasks.
\end{remark}

\section{Experimental Results}
\label{sec:experiments}

To showcase the effectiveness of the proposed resilience synthesis approach, in this section we consider a simulated scenario with a team of 6 ground mobile robot modeled using single integrator dynamics $\dot x_i = u_i$, with $x_i,u_i\in\R^2$ for all $i\in\{1,\ldots,6\}$. The energy dynamics have been modeled as in \cite{notomista2019constraint}. We want the robot team to perform 2 coordinated tasks, namely coverage control---consisting in spreading over a given environment---and formation control---consisting, in the case considered in this paper, in assembling a hexagonal formation (see \cite{cortes2017coordinated} for more details). To this end, we define the CBFs $h_{ij}$ for robot $i$ to execute task $j$ as in \cite{notomista2019constraint}. Finally, we consider the failure of 2 robots by setting their corresponding control inputs to 0 starting from iteration 180, for robot 6, and 240, for robot 2.

\begin{figure*}
\centering
\subfloat[Iteration 1]{\label{subfig:without1}\includegraphics[trim={7cm 3cm 6cm 2cm},clip,width=0.33\textwidth]{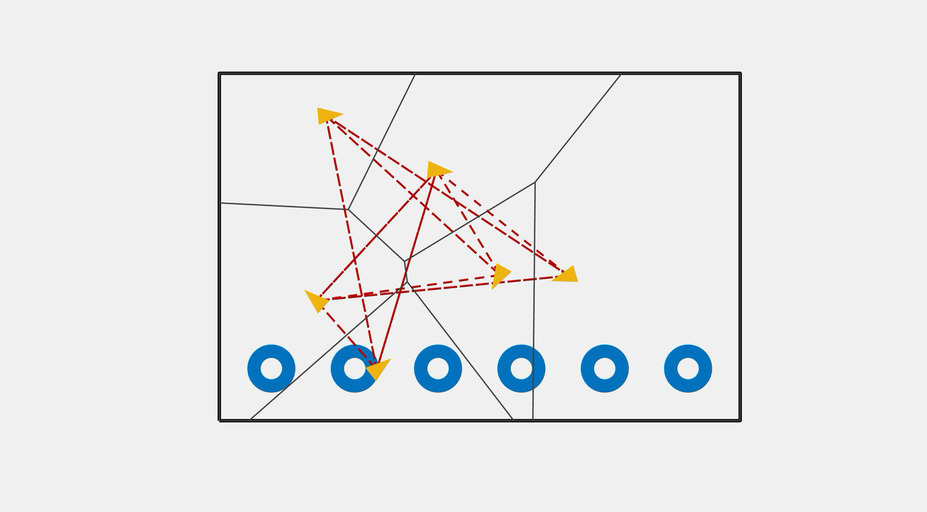}}%
\subfloat[Iteration 40]{\includegraphics[trim={7cm 3cm 6cm 2cm},clip,width=0.33\textwidth]{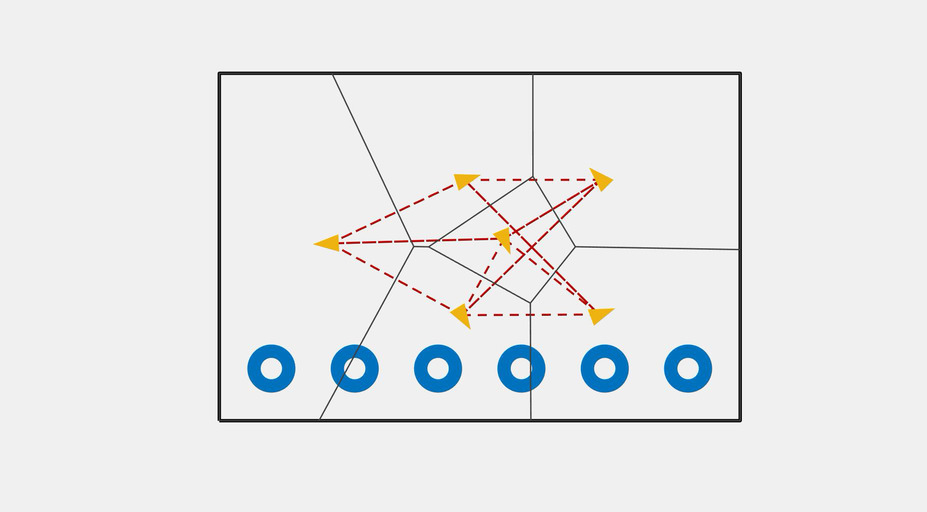}}%
\subfloat[Iteration 80]{\includegraphics[trim={7cm 3cm 6cm 2cm},clip,width=0.33\textwidth]{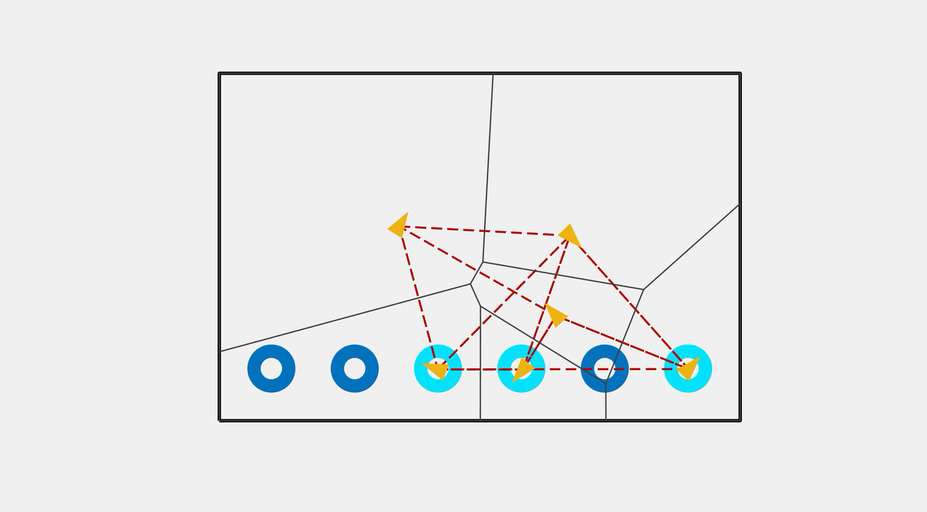}}\\
\subfloat[Iteration 120]{\includegraphics[trim={7cm 3cm 6cm 2cm},clip,width=0.33\textwidth]{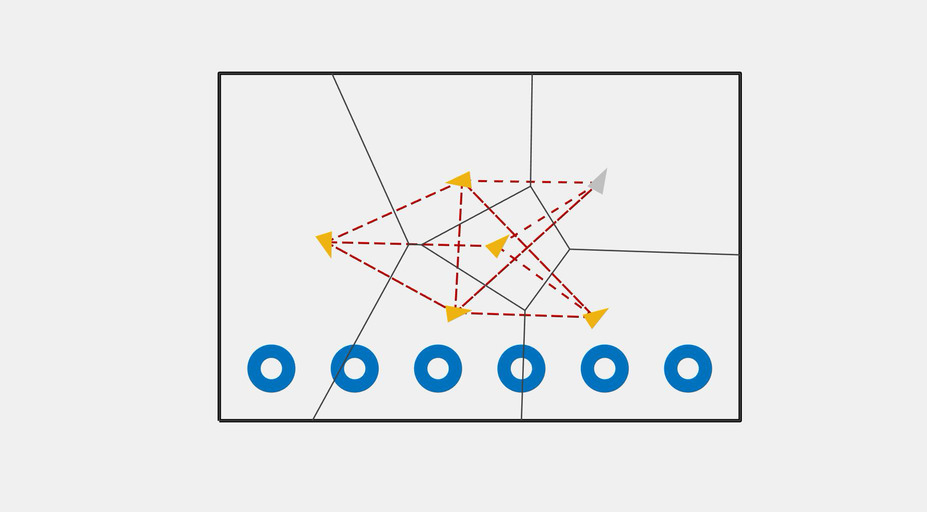}}%
\subfloat[Iteration 240]{\includegraphics[trim={7cm 3cm 6cm 2cm},clip,width=0.33\textwidth]{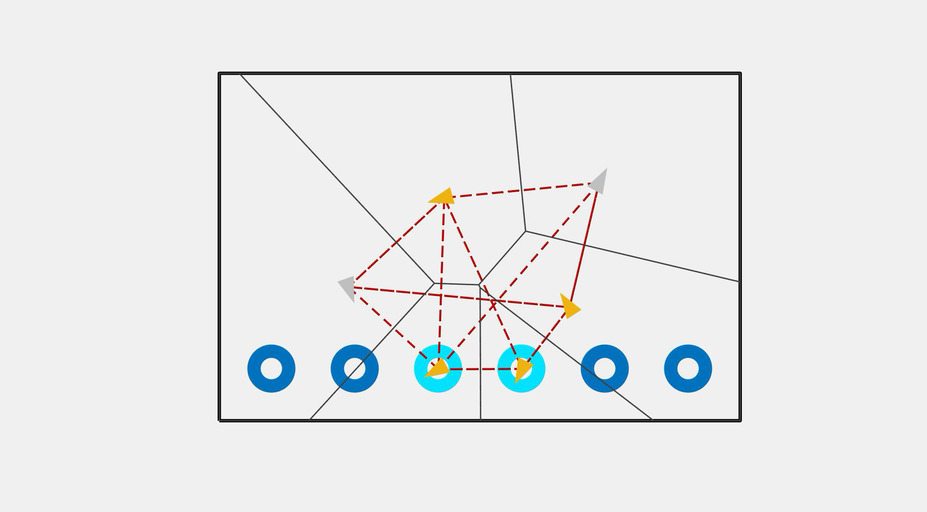}}%
\subfloat[Iteration 360]{\label{subfig:without2}\includegraphics[trim={7cm 3cm 6cm 2cm},clip,width=0.33\textwidth]{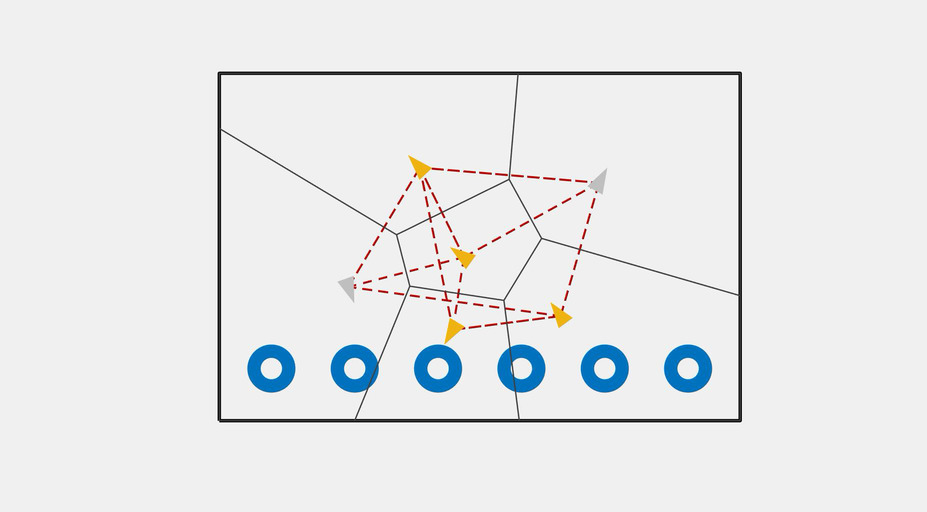}}
\caption{Snapshot from the simulated experiment where 6 robots (yellow triangles) are deployed to execute 2 tasks: coverage and formation control. The former is executed by achieving a centroidal Voronoi tessellation \cite{cortes2017coordinated}, while the latter consists in maintaining specified distances between pairs of robots. Voronoi cells are depicted as black solid lines, while pairs of robots maintaining specified distances are connected by a red dashed line segment. The robots execute the control input solution of \eqref{eq:cdcnrobotsmtaskscbfs}. As a result, they try to execute both tasks subject to the constraint that their energy never falls below a minimum threshold. In order to recharge their batteries, they are driven to dedicated charging stations (depicted as blue circles) by the control input solution of \eqref{eq:cdcnrobotsmtaskscbfs}. During the course of the experiment, two robots fail, after which they are not able to move anymore, and are shown as gray triangles.}
\label{fig:snapshotswithout}
\end{figure*}
\begin{figure*}
\centering
\subfloat[Iteration 1]{\label{subfig:with1}\includegraphics[trim={7cm 3cm 6cm 2cm},clip,width=0.33\textwidth]{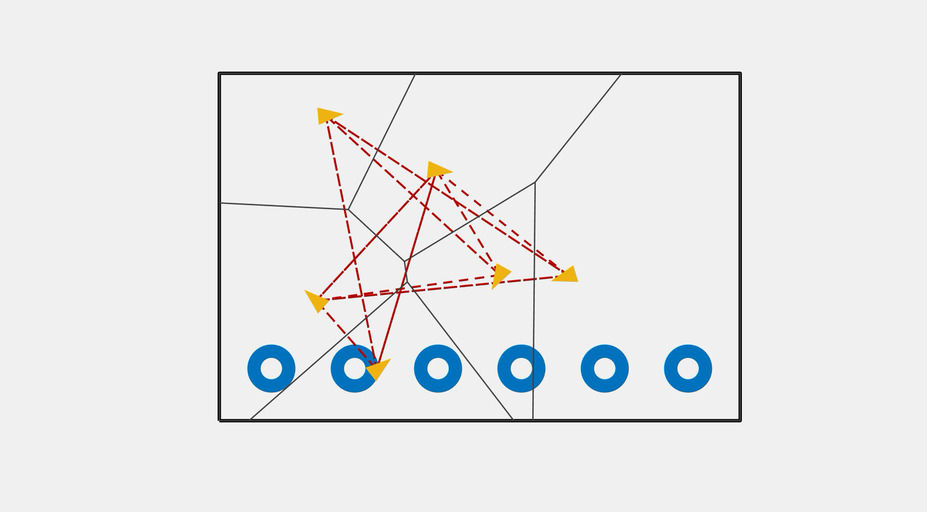}}%
\subfloat[Iteration 40]{\includegraphics[trim={7cm 3cm 6cm 2cm},clip,width=0.33\textwidth]{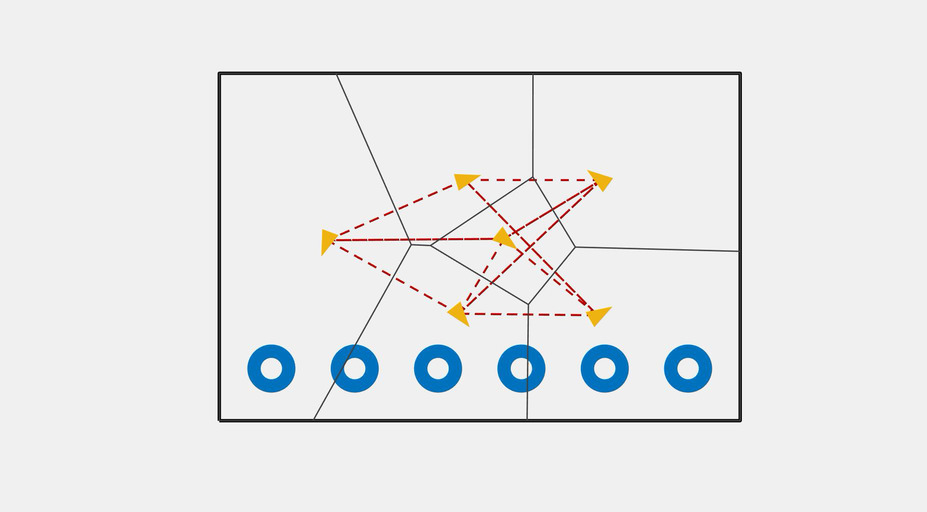}}%
\subfloat[Iteration 80]{\includegraphics[trim={7cm 3cm 6cm 2cm},clip,width=0.33\textwidth]{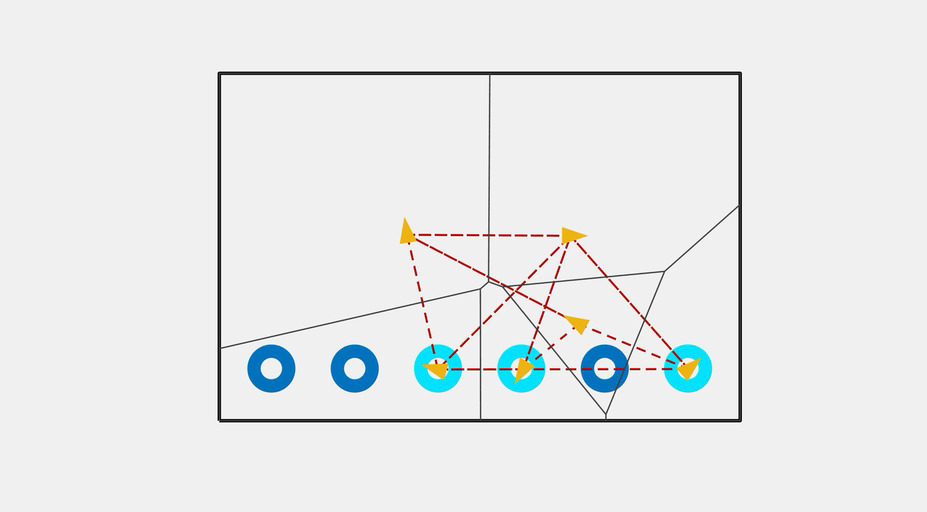}}\\
\subfloat[Iteration 120]{\includegraphics[trim={7cm 3cm 6cm 2cm},clip,width=0.33\textwidth]{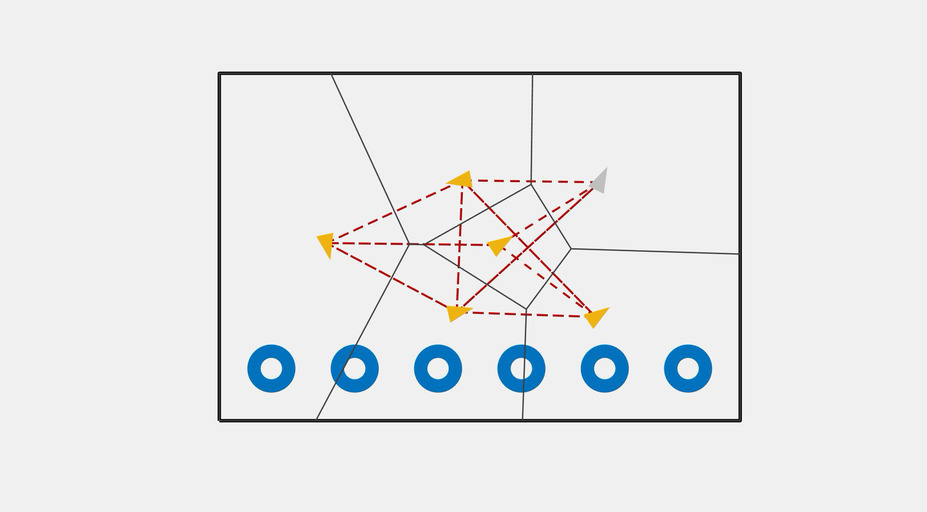}}%
\subfloat[Iteration 240]{\includegraphics[trim={7cm 3cm 6cm 2cm},clip,width=0.33\textwidth]{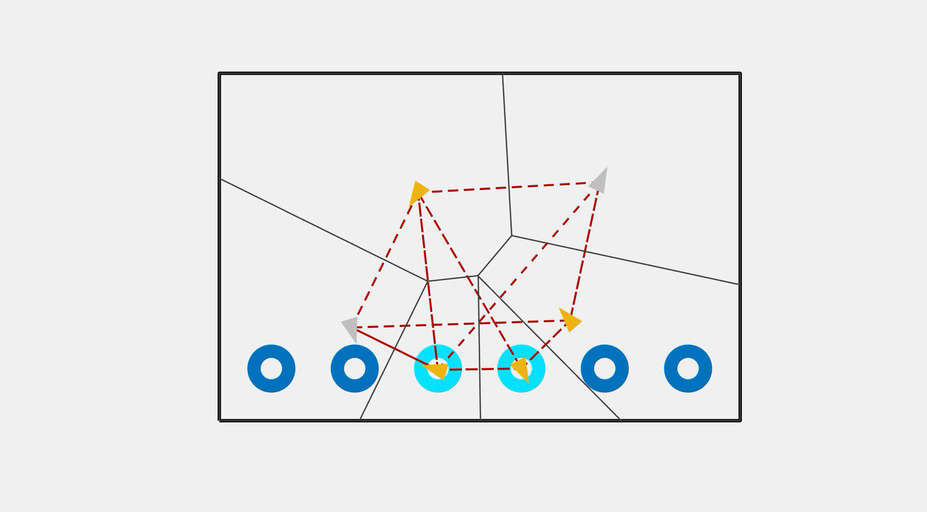}}%
\subfloat[Iteration 360]{\label{subfig:with2}\includegraphics[trim={7cm 3cm 6cm 2cm},clip,width=0.33\textwidth]{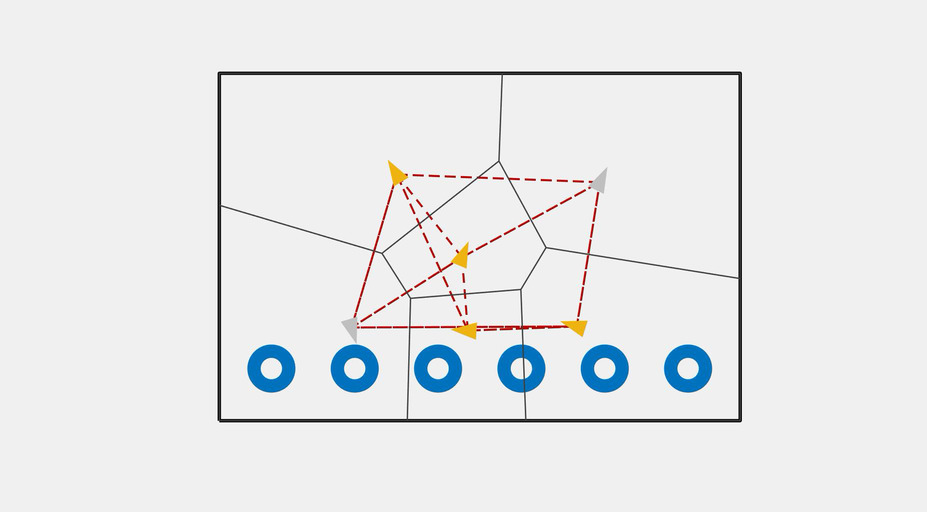}}
\caption{Same scenario of Fig.~\ref{fig:snapshotswithout}. In this case, however, the robots are driven by the control input solution of \eqref{eq:cdcnrobotsmtaskscbfsresilient}, i.e. with the addition of the resilient contraint \eqref{eq:rc}. The resulting final positions yield a better execution of both the coverage and the formation control tasks.}
\label{fig:snapshotswith}
\end{figure*}
Figures~\ref{fig:snapshotswithout} and \ref{fig:snapshotswith} show a sequence of snapshots recorded during the course of one simulated experiment in which the robots are controlled using the control input $u_i$ solution of the optimization program \eqref{eq:cdcnrobotsmtaskscbfs} (i.e. without the resilience constraint) and $u_i$ solution of the optimization program \eqref{eq:cdcnrobotsmtaskscbfsresilient} (i.e. with the resilience constraint). The robots are depicted as yellow triangles\footnote{The robots are modeled as single integrator, but the simulated dynamics are unicycle, i.e. points with an orientation. The transformation between single integrator inputs and unicycle inputs has been implemented as in \cite{notomista2019constraint} in order to control the robots in the simulator.}, which turn gray when the robots experience a failure. The blue circles represent the charging stations which turn light blue when the robots are charging on them. The black solid lines are the boundary of the Voronoi cells corresponding to the positions of the robots. These are used to execute the coverage control task as explained in \cite{cortes2017coordinated}. The red dashed lines are the edges between the robots on which specified distances are to be maintained in order to achieve the desired hexagonal formation.

\begin{figure}
\centering
\subfloat[Absolute value of the coverage control task CBF.]{\label{subfig:graphcoverage}\includegraphics[trim={2cm 0cm 3cm 0cm},clip,width=0.85\linewidth]{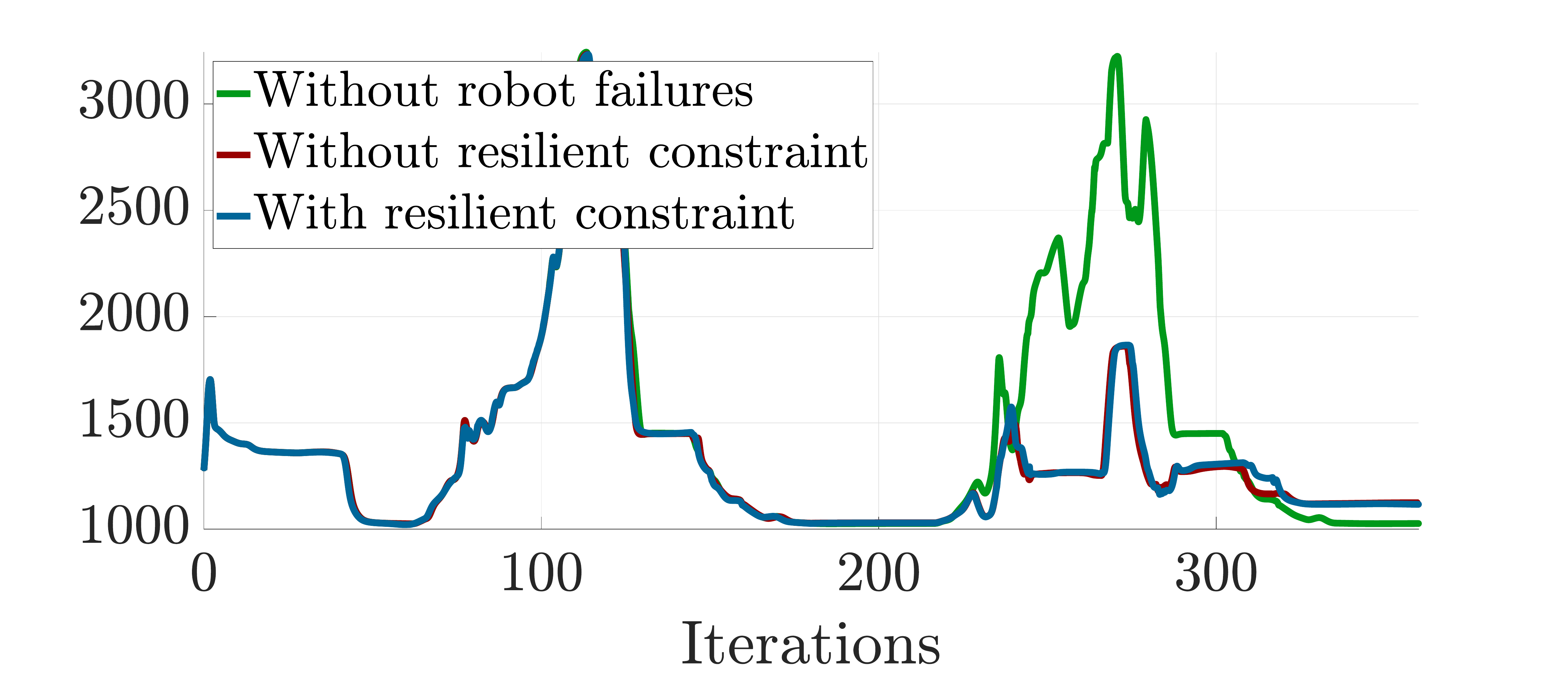}}\\
\subfloat[Absolute value of the formation control task CBF.]{\label{subfig:graphformation}\includegraphics[trim={3cm 0cm 3cm 0cm},clip,width=0.85\linewidth]{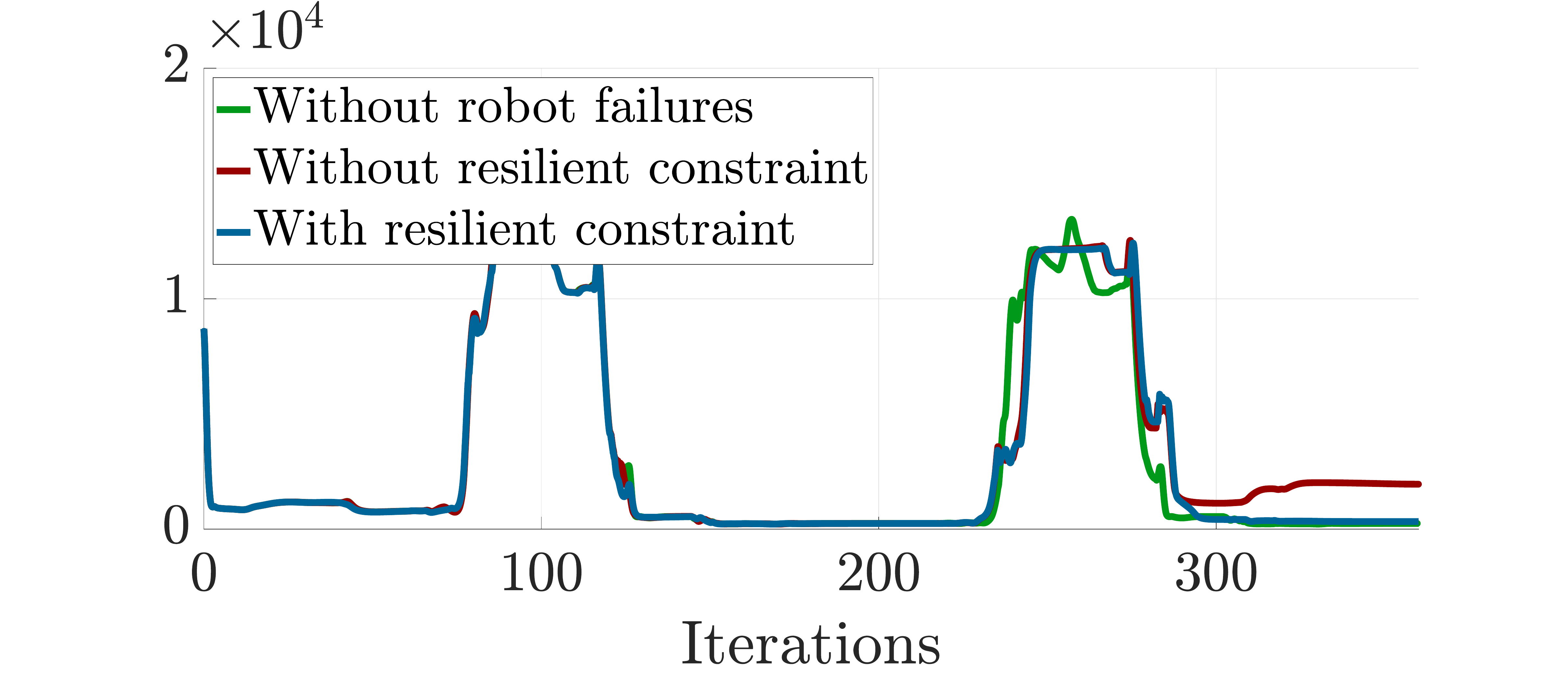}}
\caption{Absolute values of the task CBFs (averaged over 100 runs) recorded during the course of the simulated experiment. Lower values signify better task execution. Different colors correspond to baseline with no robot failures (green), experiment with robot failures but no resilient constraint (red), experiment with robot failures and resilient constraint (blue). As can be seen by the values of the task CBFs at the end of the simulation, adding the resilient constraint allows the robots to execute both tasks more effectively compared to the case where no resilient constraint is included.}
\label{fig:graphcomparison}
\end{figure}
Figure~\ref{fig:graphcomparison} reports the values of the task CBFs, corresponding to the coverage control task (Fig.~\ref{subfig:graphcoverage}) and the formation control task (Fig.~\ref{subfig:graphformation}), recorded during the course of the experiment. The values are the average over 100 simulated experiments. In fact, as derived in Section~\ref{subsec:analysis}, the resilience metric based on the frame potential holds in expectation over different tasks and failing robots. The baseline when no robot failures are introduced is plotted in green. The red line corresponds to the case where no resilience constraint is included in the optimization program \eqref{eq:cdcnrobotsmtaskscbfs}, while the blue line is obtained by letting the robots execute the control input solution of \eqref{eq:cdcnrobotsmtaskscbfsresilient}. The lower the value of task CBFs, the better the task is executed. As can be seen, adding the resilience constraints has the effect of allowing the robots to reconfigure themselves so that they are able to execute both tasks more effectively.

\begin{figure}
	\centering
	\includegraphics[trim={3cm 0cm 3cm 0cm},clip,width=0.85\linewidth]{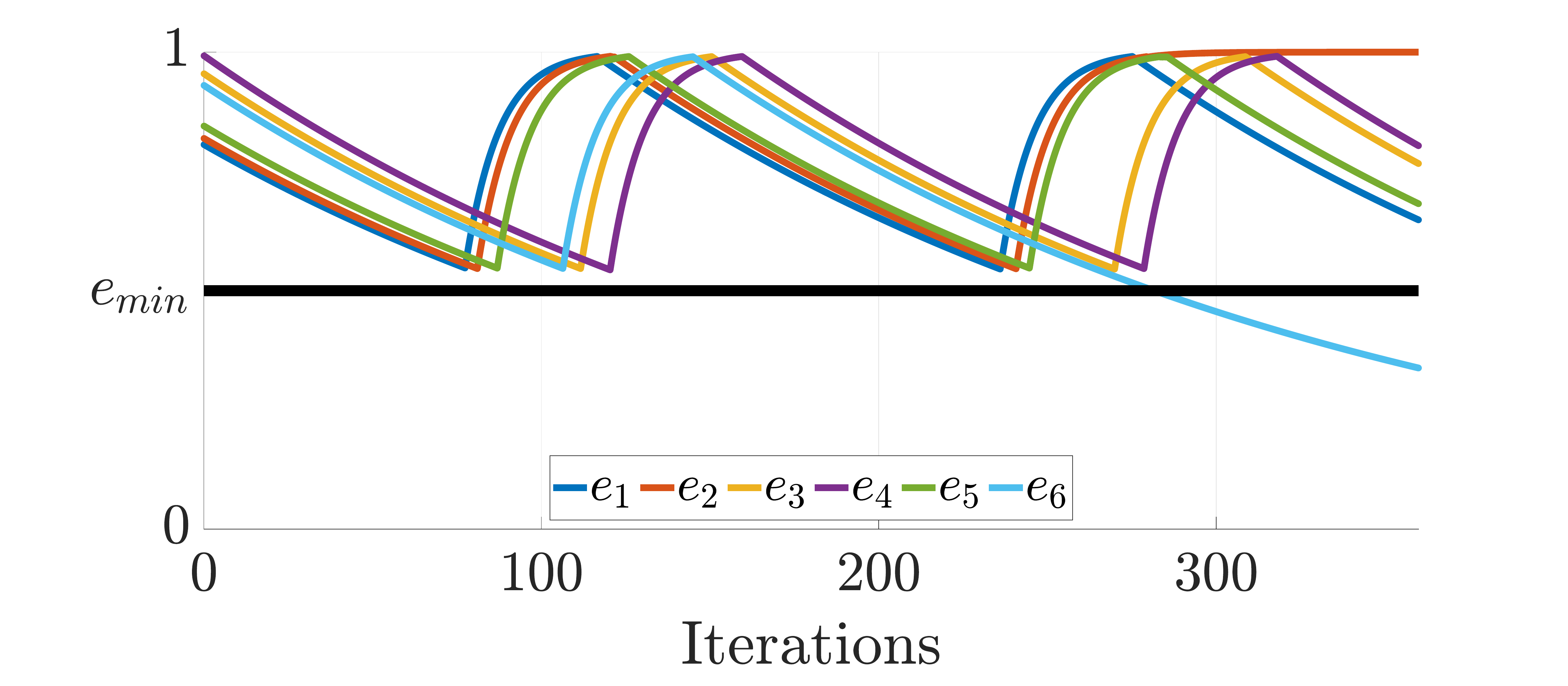}
	\caption{Energy stored in the batteries of the 6 robots executing 2 tasks in one simulated experiment. Thanks to constraint \eqref{eq:energyconstraint}, the value of the energy is kept above the minimum threshold $e_{min}$ (thick black line). The robots are controlled by the solution of the optimization program \eqref{eq:cdcnrobotsmtaskscbfsresilient} and drive to a dedicated charging station when their energy is getting too close to $e_{min}$. Except for two failing robots---whose energy cannot be controlled by driving to a charging station---all robots keep their energy above the minimum threshold $e_{min}$.}
	\label{fig:energy}
\end{figure}
Finally, Figure~\ref{fig:energy} shows the energy of the 6 robots recorded during the course of one simulated experiment. Thanks to the constraint \eqref{eq:energyconstraint}, the energy of all the robots---except for one robot that failed away from its charging station and therefore was not able to get back to it anymore---always remain above the minimum threshold $e_{min}$, whose value is marked with a thick black line. Thus, enforcing the resilient constraint \eqref{eq:rc} in the optimization-based control synthesis results in the multi-robot system being able to better execute both coverage and formation control, while, at the same time, allowing the robots to never discharge their batteries.

\begin{remark}
As can be seen from the snapshots of the simulations in Figures~\ref{fig:snapshotswithout} and \ref{fig:snapshotswith}, energy constraints move the robots away from the execution of the tasks. The resilience constraint also helps in the cases where the inputs of the robots deviate from the desired ones not because of failures, but rather because the robots are in need of energy. This is an additional benefit of considering energy awareness and resilience holistically for long-duration robot autonomy.
\end{remark}

\section{Conclusions}
\label{sec:conclusions}

In this paper, we presented a novel frame-theoretic metric for resilience of multi-robot systems, and an optimization-based control framework able to holistically ensure resilience and energy-awareness. A so-called frame potential is leveraged in the constraint-driven control framework in order to quantify and improve the resilience properties of multi-robot systems. The approach is showcased in simulation on a team of mobile robots executing multiple tasks and subject to unmodeled robot failures.






\bibliographystyle{IEEEtran}
\bibliography{bib/IEEEabrv,bib/references}

\end{document}